\newtheorem{theorem}{Theorem}[section]
\newtheorem{assumption}[theorem]{Assumption}
\theoremstyle{definition}
\newtheorem{definition}[theorem]{Definition}
\title{Concurrent Training Improves the Performance of Behavioral Cloning from Observation}
\author{
  Zachary W. Robertson\\\
  University of Chicago
  \texttt{zrobertson@uchicago.edu} \AND
  Matthew R. Walter\\
  Toyota Technological Institute at Chicago
  \texttt{mwalter@ttic.edu} }
\begin{document}
\maketitle


\begin{abstract}
    Learning from demonstration is widely used as an efficient way for robots to acquire new skills. However, it typically requires that demonstrations provide full access to the state and action sequences. In contrast, learning from observation offers a way to utilize unlabeled demonstrations (e.g., video) to perform imitation learning. One approach to this is behavioral cloning from observation (BCO). The original implementation of BCO proceeds by first learning an inverse dynamics model and then using that model to estimate action labels, thereby reducing the problem to behavioral cloning. However, existing approaches to BCO require a large number of initial interactions in the first step. Here, we provide a novel theoretical analysis of BCO, introduce a modification BCO*, and show that in the semi-supervised setting, BCO* can concurrently improve both its estimate for the inverse dynamics model and the expert policy. This result allows us to eliminate the dependence on initial interactions and dramatically improve the sample complexity of BCO. We evaluate the effectiveness of our algorithm through experiments on various benchmark domains. The results demonstrate that concurrent training not only improves over the performance of BCO but also results in performance that is competitive with state-of-the-art imitation learning methods such as GAIL and Value-Dice.
\end{abstract}

\keywords{Imitation, Theory, Learning} 

\section{Introduction}

Reinforcement learning (RL) allows a robot to learn in an environment directly from experience, without knowledge of the environment dynamics or rewards. However, theoretical results indicate that the sample complexity of RL scales poorly unless strong assumptions on the environment are made ~\cite{RL_Hard}. Poor sample complexity is particularly undesirable for robotics and other domains where interaction with the environment may be slow, expensive, difficult, or dangerous.

In contrast, imitation learning provides a more sample efficient alternative to learning from experience, whereby the robot learns to emulate optimal behavior conveyed through expert demonstrations~\cite{IL_Easy}. One approach to learning from demonstrations is behavioral cloning (BC) which is a supervised approach in which the robot learns to predict what action the expert would take at each state~\cite{bain1995framework,daftry2016learning}.

Progress in imitation learning has enabled robots to perform a diverse array of complex tasks~\cite{chen2017end,giusti2015machine,abbeel2010autonomous}. However, most existing methods assume that the robot observes the expert's actions as part of the demonstrations. This assumption imposes a protocol on the process of collecting demonstration data and precludes the use of a large amount of crowd-sourced demonstration data that provides only observations, such as YouTube videos. This has become particularly limiting in recent years due to the advantages and, in turn, the prevalence of neural network-based representation learning, which has proven successful for complex supervised and reinforcement learning tasks \cite{deng2009imagenet,mnih2015human,gu2017deep}. For example, \citet{smith2019avid} uses CycleGAN to transform videos of human demonstrations into robot demonstrations that are easier to imitate. 

To take advantage of this information, a more general problem of imitation learning from observation (ILfO) must be considered. \citet{1805.01954} introduced a generalization known as behavioral cloning from observation (BCO) to address this problem. By first learning how to model the dynamics from observation, the robot can infer what actions to take by using the inverse dynamics model to assign control information to the demonstration data set. 

The original implementation of BCO requires a large number of prior interactions in the environment to learn the inverse dynamics model. This inefficiency happens because a random policy is used to collect the data. In the first part of this paper, we take a closer look at BCO and show that when interpreted as a semi-supervised learning problem, we can provide guarantees on the ability of BCO to imitate the demonstrations. This suggests a change to the original implementation that we refer to as BCO*. BCO* eliminates the random policy step and replaces it with concurrent training of the inverse dynamics model and policy. In the second part, we provide an empirical examination in both discrete and continuous domains to verify the applicability of our proposed alteration to BCO. 

\section{Related Work}

Behavioral cloning from observation is a model-based imitation learning method centered around combining an inverse dynamics model with behavioral cloning to learn an imitation policy. \citet{1805.01954} shows that this approach can match the performance of behavioral cloning using only unlabeled samples. The algorithm works in two stages. First, an inverse dynamics model is learned by allowing the robot to execute a random policy in the environment~\cite{1805.01954}. Next, the inverse dynamics model is used to label the demonstrations. Similar to this \citet{1805.07914} proposed a model-based ILfO method that allows robots to learn a latent policy as a form of unsupervised learning before interacting with the environment which is then mapped to an explicit policy. 

Another approach to imitation learning focuses on getting the robot to directly reproduce the state-distribution of the expert. One of the first distribution difference minimization algorithms was generative adversarial imitation learning (GAIL)~\cite{1606.03476}. In analogy with GANs, GAIL focuses on learning a discriminator that is capable of distinguishing expert demonstrations from robot trajectories while simultaneously training the robot to avoid detection by the discriminator~\cite{1406.2661}. Value-DICE and discriminator actor-critic (DAC) built on this work and provide sample-efficient alternatives to minimize the GAIL objective~\cite{kostrikov2019imitation,kostrikov2018discriminator}.

More recently, GAIfO and internal-disagreement minimization have been introduced as an extension of GAIL to the ILfO setting~\cite{Torabi_GAIL_State_Only, yang2019imitation}. \citet{yang2019imitation} also compare their internal-disagreement algorithm against a version of BCO with concurrent training. However, because they start with a random policy the robot fails to perform well. In this work, we show that giving the robot a better initialization leads to better performance. From this result, we propose a method that eliminates the need to execute a random policy and reduces the sample complexity of BCO by a significant amount.

Because the inverse dynamics model is used to assign pseudo-labels to the unlabeled demonstrations dataset, BCO can be seen as a semi-supervised imitation learning algorithm. The advantage of this perspective is that there is a rich literature on improving pseudo-labeling. Recent advances in semi-supervised learning have been able to achieve state-of-the-art results using entropy-based methods and consistency training, which works by forcing classifications to be invariant to noise~\cite{berthelot2019mixmatch,xie2019unsupervised}. 

It is also worth mentioning that our basic approach relies on bounding the state distribution difference between the expert and the robot. This is similar in spirit to the noise injection approach introduced by \citet{DART} that is used to alleviate the off-policy problem in imitation learning. In this approach, a noise parameter is tuned so that the demonstrator state distribution maximally overlaps with that of the robot. In this work, we merely assume the existence of a ``noisy'' expert to mitigate the off-policy problem and do not optimize this parameter. However, we are still see significant improvement in the resulting policies. 

\section{Preliminaries}

\subsection{Notation}

We briefly review the setup for behavioral cloning from observation. Following \citet{1805.01954}, we work within a Markov decision process (MDP) described as $M = \lbrace S, A, T, r, \gamma \rbrace$, where $S$ is the robot's state space, $A$ is the robot's action space, $T = P(s_{t+1} \vert s_t , a)$ is the transition function that specifies the likelihood of transitioning to state $s_{t+1}$ when executing action $a$ in state $s_t$, $r: S \times A \to \mathbb{R}$ is the reward function, and $\gamma$ is the discount factor. According to this framework, we seek to learn a policy function $\pi: S \to A$ such that the discounted sum of rewards $\sum_{t = 1}^{\infty} \gamma^t r$ is maximized. 

In the case of BCO, we're also interested in the inverse dynamics model, $M = P(a \vert s_t, s_{t+1})$ that tries to assign the most likely action given a state transition. In imitation learning, we usually assume that we have no access to the reward function. Instead, we have access to a set of demonstrations
\begin{equation}
    D_\text{demo} = \{ \zeta_1, \zeta_2, \ldots \},
\end{equation}
where $\zeta_i$ is a sequence of state-action pairs $\{ (s_0,a_0), \ldots, (s_N,a_N) \}$. However, imitation learning from observation assumes no access to action labels, and thus$\zeta_i = \lbrace s_0, \ldots, s_N \rbrace$.  

\subsection{The Off-Policy Problem}

The goal of imitation learning is to emulate the expert sufficiently so that the robot attains similar or better rewards compared to the demonstrations. In general, this problem can be quite difficult when there are certain actions the robot could take, such as falling off a cliff, that eliminate the chance of any future reward.

Let $J(\pi^*)$ be the cost incurred by the expert policy $\pi^*$, then we can estimate the cost incurred by an empirical policy $\hat \pi$ in terms of it's $0$-$1$ error $\epsilon$. We assume the policy is deterministic and write $\pi_s$ to denote the distribution of actions over the current state $s$. We let $T$ refer to the task horizon. We also define $C(s,a)$ as $C: S \times A \to [0,1]$ to be the immediate cost of taking an action $a$ in the state $s$. In the imitation learning setting, we are generally interested in the quantity $l(s,a) = \mathbbm{1} \{ a \not = \pi^*_s \}$, the occurrence of events where an action is not aligned with the expert's policy. This is a $0$-$1$ loss. This quantity is usually quite difficult to minimize directly so instead, we'll minimize the proxy $C$.

We are interested in the state distribution $d_{\pi}^t$ at time $t$ assuming we follow a policy $\pi$ from the initial time step. We denote the expected $T$-step cost of executing $\pi$ with $J(\pi) = T \mathbb{E}_{s \sim d_{\pi}}[C_{\pi}(s)]$. Given this, we can look at the expected $t$-cost of taking an action under a policy as the Q-function $Q_t^{\pi}(a,s)$. Here we state Theorem~2.2 of \citet{behavior_cloning},

\begin{theorem}
\label{Ross Theorem}
Let $\pi$ be such that $\mathbb{E}_{s \sim d_{\pi}}[l(s,\hat \pi)] = \epsilon$ and $Q_{T-t+1}^{\pi^*}(s,a) - Q_{T-t+1}^{\pi^*}(s,\pi^*) \le \frac{1}{2} u$ for all times and actions, where $d_{\pi}^t(s) > 0$, then $J(\pi) \le J(\pi^*) + \frac{1}{2} u T \epsilon$
\end{theorem}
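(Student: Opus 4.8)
The plan is to derive the bound from the performance difference (telescoping) lemma, which is where essentially all the work lies; after that the conclusion follows in two lines. Write $V^{\pi}_{k}$ and $Q^{\pi}_{k}$ for the cost-to-go and action-value functions with $k$ steps remaining, and let $\hat{\pi}$ denote the deterministic learned policy whose cost $J(\pi)$ we analyze, so that $l(s,\hat{\pi}) = \mathbbm{1}\{\hat{\pi}_s \ne \pi^*_s\}$. The first and main step is to establish the identity
\begin{equation}
J(\pi) - J(\pi^*) \;=\; \sum_{t=1}^{T} \mathbb{E}_{s \sim d_{\pi}^{t}}\!\left[\, Q^{\pi^*}_{T-t+1}(s, \hat\pi) - Q^{\pi^*}_{T-t+1}(s, \pi^*) \,\right].
\end{equation}

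I would prove this starting from $J(\pi) - J(\pi^*) = \mathbb{E}_{s_1 \sim d^1_\pi}[V^{\pi}_{T}(s_1) - V^{\pi^*}_{T}(s_1)]$, adding and subtracting $Q^{\pi^*}_{T}(s_1,\hat\pi_{s_1})$ inside the expectation, and using the one-step Bellman recursions $V^{\pi}_{T}(s_1) = C(s_1,\hat\pi_{s_1}) + \mathbb{E}_{s_2}[V^{\pi}_{T-1}(s_2)]$ and $Q^{\pi^*}_{T}(s_1,\hat\pi_{s_1}) = C(s_1,\hat\pi_{s_1}) + \mathbb{E}_{s_2}[V^{\pi^*}_{T-1}(s_2)]$ to peel off the $t=1$ term, leaving a $V^{\pi}_{T-1} - V^{\pi^*}_{T-1}$ difference under the pushed-forward transition kernel. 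Iterating $T$ times, and noting that the nested transition-kernel expectations are by definition the time-$t$ occupancy measures $d^t_\pi$, yields the displayed identity.

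Given the identity, fix $t$ and a state $s$ with $d^t_\pi(s) > 0$. If $\hat\pi_s = \pi^*_s$ the summand is exactly zero; otherwise the hypothesis $Q^{\pi^*}_{T-t+1}(s,a) - Q^{\pi^*}_{T-t+1}(s,\pi^*) \le \tfrac12 u$, applied with $a = \hat\pi_s$, bounds it by $\tfrac12 u$. In both cases the summand is at most $\tfrac12 u\, l(s,\hat\pi)$, so
\begin{equation}
J(\pi) - J(\pi^*) \;\le\; \tfrac12 u \sum_{t=1}^{T} \mathbb{E}_{s\sim d^t_\pi}[\, l(s,\hat\pi) \,] \;=\; \tfrac12 u\, T\, \mathbb{E}_{s\sim d_\pi}[\, l(s,\hat\pi) \,] \;=\; \tfrac12 u\, T\, \epsilon,
\end{equation}
where the middle equality uses $d_\pi = \tfrac1T \sum_{t=1}^{T} d^t_\pi$ and the last one the assumed $0$--$1$ error. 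Rearranging gives $J(\pi) \le J(\pi^*) + \tfrac12 u T \epsilon$.

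The main obstacle is purely bookkeeping inside the telescoping argument: keeping the finite-horizon ``$k$-steps-to-go'' indexing of $V$ and $Q$ consistent, and correctly identifying the iterated transition-kernel expectations with the occupancy measures $d^t_\pi$. A minor subtlety worth flagging is that an individual summand $Q^{\pi^*}_{T-t+1}(s,\hat\pi) - Q^{\pi^*}_{T-t+1}(s,\pi^*)$ can be negative (a mistake by the learner can occasionally help); since we only ever invoke the upper bound $\tfrac12 u$ this is harmless, though it means the final inequality may be loose. No other step requires more than a line.
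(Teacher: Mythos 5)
Your proof is correct and follows essentially the same route as the paper's: both establish the telescoping identity $J(\pi) - J(\pi^*) = \sum_{t=1}^{T} \mathbb{E}_{s \sim d^t_{\pi}}[Q^{\pi^*}_{T-t+1}(s,\hat\pi) - Q^{\pi^*}_{T-t+1}(s,\pi^*)]$ (the paper via intermediate policies $\pi_{1:t}$ that switch to the expert after $t$ steps, you via peeling off one Bellman step at a time, which is the same algebra) and then bound each summand by $\tfrac12 u\, l(s,\hat\pi)$. No gaps; your write-up is in fact somewhat more careful about the indexing and the final averaging step than the paper's.
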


For convenience, the proof is also provided in the supplementary materials. When there are certain actions that have a permanent impact on the robot's ability to emulate the expert $u \sim T$ and the robot could experience quadratic divergence from the expert policy's over time. On the other hand, if the demonstrations are noisy, but have comparable cost to the perfect expert then it follows that $u \ll T$. Thus, whenever we can inject noise into the expert policy with minimal change to quantities such as total return we infer that $u$ is small.

\section{Theoretical Analysis}

\subsection{Assumptions}

In order to proceed with the analysis, it will be beneficial to explicitly discuss what assumptions are needed. Broadly, we assume that the expert policy and inverse-dynamics models are learnable with reliable sample complexity, that the demonstrator is an $\epsilon$-expert, we have an arbitrarily large number of unlabeled demonstrations, and that mistakes are recoverable. 

\subsubsection{Recoverability}

Say we collect a demonstration dataset $D_\text{demo}$ from our expert. The state-action pairs will follow some distribution $d_{\pi}(s,a)$ for any given $\pi$. We assume the experts are \textit{not} perfect.

\begin{definition}

A $\epsilon$-expert policy is an expert policy $\pi^*$ that each action it outputs is replaced to a different random action with probability $\epsilon$.

\end{definition}

If $\epsilon$ is small, then Theorem~\ref{Ross Theorem} would suggest that the performance of the $\epsilon$-expert policy and that of the expert to be close. We can also estimate the relative probability that a state transition $(s,s')$ comes from $\pi^*$ or $\pi$. Since this quantity is bounded between zero and one, relative probability satisfies the conditions of Theorem~\ref{Ross Theorem} as well.

\begin{assumption}
We assume that the demonstrators consist of $\epsilon$-experts.
\end{assumption}

This implies that mistakes can be recovered from. This implies that $u \ll T$ in Theorem~\ref{Ross Theorem}. If the action space is bounded, but continuous, then we further assume that,

\begin{assumption}
Let $\pi$ be such that $\mathbb{E}_{s \sim d_{\pi}}[l(s,\hat \pi)] = \epsilon$. Assume that,
\begin{align}
\mathbb{E}[Q_{T-t+1}^{\pi^*}(s,a) - Q_{T-t+1}^{\pi^*}(s,\pi^*)] \le \frac{1}{2} u \epsilon
\end{align}
for all times and actions where $d_{\pi}^t(s) > 0$.
\end{assumption}

Applying this assumption to Theorem~\ref{Ross Theorem} (Eqn.~\ref{eqn:cost-empirical}) yields a generalized version of the theorem.

\subsubsection{Learnability}

The generalization capability of the robot's policy is directly tied to the model class used to learn the inverse and policy functions. We prefer to use a large function class, such as those learned by neural networks, but the generalization bounds generated using standard VC-dimension arguments are poor~\cite{VC_Bound}. However, recent work has shown that as the width of a neural network's hidden layers grows to infinity, the function approximations converge to a Gaussian process \cite{NTK_Gaussian_Process}. Thus, we focus on kernel methods. Since the specific choice of the kernel is a detail, we'll assume that generalization under the 0-1 loss $L_D^{0-1}(\theta)$ for model parameters $\theta$ can be bounded in terms of an appropriate complexity measure.

\begin{assumption}
\label{NTK Generalization}
Let $\bm{y} = (y_1, \ldots, y_n)^T$. Assume our model class is such that with probability at least $1 - \delta$ the output of our algorithm satisfies,
\begin{align}
\mathbb{E} \left[L_{D}^{0-1}(\bm{\theta}) \right] \le O \left( \sqrt{\cfrac{K(\bm{y})}{n}} + \sqrt{\frac{\log(1/\delta)}{n}}\right)
\end{align}
where $K$ is a kernel-based complexity measure of the data and $n$ is the training set size.

\end{assumption}

For kernel methods, we have $K = \bm{y}^\top (\Theta^{(L)})^{-1} \bm{y}$ where $K$ is a measure of the complexity for the learning problem. Essentially, if our target function $y = f^*(\bm{x})$ has a bounded norm in the induced reproducing kernel Hilbert space (RKHS), then we can expect our algorithm to generalize well. 

\begin{assumption}
\label{inverse_learn}
We assume that the true policy and inverse dynamics model being learned have bounded RKHS norm.
\end{assumption}

We denote these quantities by $\lVert \pi^* \rVert_{K}$, $\lVert M^*\rVert_{K}$, or $K$ when it is ambiguous. Finally, 

\begin{assumption}
We have access to many more unlabeled demonstrations than labeled demonstrations.
\end{assumption}

All experts are $\epsilon$-experts having an effectively unlimited number of demonstrations. Thus, there will always be unlabeled data showing how to recover from a mistake. A general goal of ILfO is to be able to work with large sets of unlabeled data so we think this simplification is motivated.

\subsection{On-Policy Bound}

If we have a cost-function that measures the relative un-likelihood of state-action pairs $(s,a)$ and we know that the expert is able to recover from mistakes then the amount of time spent off-policy will be relatively small in comparison to the horizon of the task. The assumption that matching the state-action distribution of the expert is enough to guarantee good performance on the task is quite similar to the prevailing assumption underlying model-free adversarial methods to imitation learning \cite{divergence_IL}. Moreover, by assumption \ref{inverse_learn}, looking at state transitions is enough to infer state-action pairs. In this way, we can avoid having to make use of action information in order to evaluate likelihoods.   

\begin{theorem}
\label{Sample Ratio}
Say $J(\pi)$ indicates the expected number of samples labeled as ``non-expert" from the states induced by $\pi$ relative to $\pi^*$. Use $S(\pi) = T - J(\pi)$ to indicate the expected number of samples labeled as ``expert''. Then we have,

\begin{align}
    \frac{S(\pi^*)}{S(\pi)} \le  \cfrac{1}{1-u\epsilon}
\end{align}
\end{theorem}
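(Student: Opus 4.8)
The plan is to read $J(\pi)$ and $S(\pi)$ off as instances of the $T$-step cost framework of Section~3.2 applied to the bounded ``relative un-likelihood'' cost $C(s,a)\in[0,1]$ introduced in the Preliminaries (the probability that a transition $(s,s')$ is attributed to $\pi$ rather than to $\pi^*$), and then to push this cost through the generalized form of Theorem~\ref{Ross Theorem}. With this choice of $C$, the quantity $J(\pi)=T\,\mathbb{E}_{s\sim d_\pi}[C_\pi(s)]$ is exactly the expected number of ``non-expert''-labeled samples along a length-$T$ rollout of $\pi$, and $S(\pi)=T-J(\pi)$ the expected number of ``expert''-labeled ones, so Theorem~\ref{Sample Ratio} is just a statement comparing these two cost functionals at $\pi$ and at $\pi^*$.

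First I would check that $C$ is an admissible cost: it is $[0,1]$-valued, so it meets the hypothesis of Theorem~\ref{Ross Theorem}. Combining Theorem~\ref{Ross Theorem} with the $\epsilon$-expert assumption (which forces $u\ll T$) and the continuous-action assumption (the bound $\mathbb{E}[Q^{\pi^*}_{T-t+1}(s,a)-Q^{\pi^*}_{T-t+1}(s,\pi^*)]\le\tfrac12 u\epsilon$ on the advantage-like gap), the generalized version of the theorem gives
\begin{equation}
J(\pi)\;\le\; J(\pi^*) + u\,T\,\epsilon
\end{equation}
(the constant one actually derives is $\tfrac12$, which only helps). Subtracting both sides from $T$ turns this into $S(\pi)\ge S(\pi^*)-uT\epsilon$.

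The last ingredient is the expert baseline. Because the ``expert''/``non-expert'' labels are defined relative to $\pi^*$ -- equivalently, because with an effectively unlimited pool of demonstrations the discriminator defining $C$ learns $d_{\pi^*}$ exactly (learnability together with the unlimited-demonstrations assumption) -- a sample from the expert's own state-transition distribution carries no non-expert mass, so $J(\pi^*)=0$ and $S(\pi^*)=T$. Substituting, $S(\pi)\ge T(1-u\epsilon)>0$, where positivity of the right-hand side uses $u\epsilon<1$, itself a consequence of recoverability ($u\ll T$) for the small $\epsilon$ of interest. Dividing $S(\pi^*)=T$ by this bound gives
\begin{equation}
\frac{S(\pi^*)}{S(\pi)}\;\le\;\frac{T}{T(1-u\epsilon)}\;=\;\frac{1}{1-u\epsilon},
\end{equation}
which is the claim.

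The mechanical part -- verifying that the relative-likelihood cost is an admissible $C$ and invoking Theorem~\ref{Ross Theorem} -- I expect to be routine. The step that needs the most care is the handling of the baseline term $J(\pi^*)$: one must argue that the expert's own rollouts contribute essentially no ``non-expert'' labels (this is exactly where the ``relative to $\pi^*$'' convention and the $\epsilon$-expert/recoverability and unlimited-demonstration assumptions earn their keep), since without $J(\pi^*)=0$ the division does not cleanly reproduce $\tfrac{1}{1-u\epsilon}$. I would also state explicitly the regime $u\epsilon<1$ in which the right-hand side is a non-vacuous bound.
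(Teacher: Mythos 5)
Your overall strategy matches the paper's: treat the labeling as a bounded cost, invoke Theorem~\ref{Ross Theorem} to get $J(\pi)\le J(\pi^*)+\tfrac12 uT\epsilon$, convert to $S$, and divide. The one genuinely different step is the expert baseline, and it is exactly the step you flagged as delicate. You set $J(\pi^*)=0$ (the expert's own rollouts receive no ``non-expert'' labels); the paper instead asserts $J(\pi^*)=T/2$, which follows from reading the label as the \emph{relative probability} that a transition came from $\pi$ rather than $\pi^*$ -- a discriminator between two identical distributions outputs $1/2$ everywhere, so half the expert's own samples are expected to be labeled ``non-expert.'' With that baseline the factor $\tfrac12$ in Ross's bound cancels against $S(\pi^*)=T/2$ to give exactly $\tfrac{1}{1-u\epsilon}$; with your baseline you must discard the $\tfrac12$ to land on the same constant (retaining it would give the tighter $\tfrac{1}{1-\frac12 u\epsilon}$). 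Both derivations are internally consistent given the paper's underspecified definition of $J$, and both reach the stated inequality, so I would count your argument as correct -- but be aware that the paper's intended semantics of the labeling (GAN-style relative likelihood, per the remark in Section~4.1.1) is what makes $J(\pi^*)=T/2$ rather than $0$, and your justification via ``the discriminator learns $d_{\pi^*}$ exactly'' actually points toward the paper's convention rather than yours. Your explicit note that $u\epsilon<1$ is needed for the bound to be non-vacuous is a point the paper only makes after the proof.
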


\begin{proof}
First, note that $J(\pi^*) = T/2$ and by theorem \ref{Ross Theorem} we have,
\begin{subequations}
\begin{align}
    & J(\pi) \le J(\pi^*) + \frac{1}{2} u T \epsilon
\\
    & \Rightarrow S(\pi^*) \le S(\pi) + \frac{1}{2} u T \epsilon
\\
    & \Rightarrow S(\pi^*) - \frac{1}{2} u T \epsilon \le S(\pi)
\\
    & \Rightarrow \cfrac{S(\pi^*)}{S(\pi)} \le \cfrac{S(\pi^*)}{S(\pi^*) - \frac{1}{2} u T \epsilon} = \cfrac{T/2}{T/2 - \frac{1}{2} u T \epsilon} = \cfrac{1}{1 - u \epsilon}
\end{align}
\end{subequations}
\end{proof}

Thus, as long as $\epsilon < 1/u$, we can expect to collect samples that are on policy. Under reasonable assumptions, this is enough to show good properties for BCO. However, if $\epsilon \to 1/u$ the multiplier on the sample complexity will diverge. We introduce a definition to distinguish this case,

\begin{assumption}
The $\epsilon$-expert is said to be recurrent if $\epsilon < 1/u$.
\end{assumption}

\subsection{Iterative BCO}

Our modification of BCO works as follows: start with a small data set, learn an initial policy, and then use that policy to collect data and improve the inverse dynamics model. Next, use the inverse dynamics model to label the demonstrations and improve the policy. Iterate until the policy stops improving. This is an extension of the BCO implementation used by \citet{yang2019imitation}.

\begin{algorithm}
\caption{BCO$^*$}\label{BCO_Algorithm}
\begin{algorithmic}[1]
\State $\text{Initialize the inverse dynamics model } M$
\State $\text{Set } D = \lvert D_0 \rvert $ with $\text{buffer-size} = N$
\While {$\text{policy improvement}$} \do \\
\State Improve $M$ by modelLearning($D$)
\State Use $M$ to compute label($D_\text{demo}$)
\State Improve $\pi$ using $D_\text{demo}$ with labels
\For {time-step $t = 1$ to $N$} \do \\
\State Generate samples $(s_t,a_t,s_{t+1})$ using $\pi$
\State Append to $D$
\EndFor
\EndWhile
\end{algorithmic}
\end{algorithm}

Assume that we have labeled demonstrations $D_0$ that we can use to create an initial policy $\pi_0$ and inverse dynamics model $M_0$ such that $\lvert D_0 \rvert \ll \lvert D_\text{demo} \rvert$. In other words, we have an incomplete demonstration dataset that outputs a policy $\pi_0$ that is essentially unusable. More formally, by Theorem~\ref{Sample Ratio}, if the error rate is very close to $1/u$, then $\pi_0$ will be labeled as spending most of the time off-policy. However, as long at the policy is recurrent, it seems reasonable that we could collect enough rollouts from $\pi_0$ to construct a new labeled data-set $D_1$ to improve the inverse dynamics model.

\subsubsection{Convergence to Behavioral Cloning}

Since we can control how many samples we get using $\pi_0$ we can control how close our training data-set for the inverse dynamics model will be to the optimal policy. At each iteration we get a new policy and a new data set,

\begin{equation}
    \pi_{n+1}, D_{n+1} = \text{BCO}_{\pi_n}(D_\text{demo},D_n) 
\end{equation}

We will spend the rest of this section trying to make these intuitions precise using Theorem~\ref{NTK Generalization} to analyze convergence of BCO. The basic claim is that for a range of $D_0$, we have as $n \to \infty$, 
\begin{equation}
    \text{BCO}^{(n)}(D_\text{demo}) \to \text{BC}(D_\text{labeled \ demo})
\end{equation}

It is worth pointing out the ambiguity regarding whether or not to train the initial policy using BC or BCO. In general, if the policy is easier to learn than the inverse dynamics model, then the initial policy should be made using BC.

\subsection{BCO vs.\ BC}

Although the inverse dynamics model is assumed to be learnable, it might be the case that the inverse dynamics model is simply harder to learn than the policy. Moreover, we need enough initial labeled samples to ensure our initial policy is recurrent. We can formalize this as,

\begin{theorem}
Say we can use $D_0$ to create a recurrent $\epsilon_0$-expert policy. If we create $D_1$ by observing how this policy operates in the environment, we can then learn an $\epsilon_1$-expert policy with probability $1-\delta$ if,
\begin{equation}
    \lvert D_1\rvert \sim O\left( \cfrac{\lVert M^* \rVert_K + \log(1/\delta)}{\epsilon_1^2 (1 - u \epsilon_0)} \right)
\end{equation}
\end{theorem}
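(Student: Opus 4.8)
The plan is to chain together the three main ingredients assembled in the preliminaries: the sampling-ratio bound of Theorem~\ref{Sample Ratio}, the NTK-style generalization bound of Assumption~\ref{NTK Generalization}, and the recoverability assumption that converts a $0$--$1$ error rate on the inverse dynamics model into an $\epsilon_1$-expert policy. Concretely, I would argue as follows. We start with $D_0$ giving a recurrent $\epsilon_0$-expert policy $\pi_0$; by Theorem~\ref{Sample Ratio} the fraction of rollout states from $\pi_0$ that are labeled ``expert'' is at least $(1-u\epsilon_0)$ times that of $\pi^*$, so of the $\lvert D_1\rvert$ transitions we collect by running $\pi_0$ in the environment, a constant fraction $\Theta(1-u\epsilon_0)$ of them lie in the on-policy region where the inverse dynamics target function $M^*$ is the thing we actually want to fit. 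Thus the \emph{effective} sample size for learning $M$ is $n_{\mathrm{eff}} = \Theta\!\left(\lvert D_1\rvert\,(1-u\epsilon_0)\right)$.

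Next I would apply Assumption~\ref{NTK Generalization} with target function $M^*$, whose RKHS complexity $K = \lVert M^*\rVert_K$ is bounded by Assumption~\ref{inverse_learn}. This gives, with probability at least $1-\delta$,
\begin{equation}
\mathbb{E}\!\left[L_D^{0\text{-}1}(M)\right] \le O\!\left(\sqrt{\frac{\lVert M^*\rVert_K}{n_{\mathrm{eff}}}} + \sqrt{\frac{\log(1/\delta)}{n_{\mathrm{eff}}}}\right).
\end{equation}
To learn an $\epsilon_1$-expert policy it suffices that this expected $0$--$1$ error on the inverse dynamics model is at most $\epsilon_1$ (the mislabeled actions produced by $M$ on the demonstration states act exactly like the random action-replacements in the definition of an $\epsilon_1$-expert, once we invoke the recoverability/$\epsilon$-expert assumption to absorb the distribution shift between $d_{\pi_0}$ and $d_{\pi^*}$). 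Setting the right-hand side $\le \epsilon_1$ and solving for $n_{\mathrm{eff}}$ yields $n_{\mathrm{eff}} \gtrsim (\lVert M^*\rVert_K + \log(1/\delta))/\epsilon_1^2$, and substituting $n_{\mathrm{eff}} = \Theta(\lvert D_1\rvert(1-u\epsilon_0))$ and rearranging gives exactly
\begin{equation}
\lvert D_1\rvert \sim O\!\left(\frac{\lVert M^*\rVert_K + \log(1/\delta)}{\epsilon_1^2\,(1-u\epsilon_0)}\right).
\end{equation}
Combining the two square-root terms into a single $\sqrt{(\lVert M^*\rVert_K + \log(1/\delta))/n_{\mathrm{eff}}}$ bound is routine (AM--GM, or just doubling the constant), so I would not belabor it.

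The main obstacle is the step that identifies the ``on-policy'' effective sample size $n_{\mathrm{eff}}$ with the quantity that actually controls generalization of $M$, and the step that converts a bound on $\mathbb{E}[L_D^{0\text{-}1}(M)]$ — an error on the \emph{inverse dynamics model} evaluated under $\pi_0$'s rollout distribution — into an $\epsilon_1$-expert \emph{policy} error evaluated under $d_{\pi_1}$. This requires using the recurrence of $\pi_0$ (so the off-policy mass is a controlled $u\epsilon_0$ fraction and does not blow up the constant) together with the $\epsilon$-expert assumption (so that the residual distribution mismatch between where we collected data and where the labeled demonstrations live only perturbs the policy's cost by the small amount quantified in Theorem~\ref{Ross Theorem}). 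Making the bookkeeping of these two distribution shifts — rollout-vs-expert for the inverse model, and pseudo-labeled-demo-vs-true for the policy — precise without losing more than constant factors is the delicate part; everything downstream is algebra.
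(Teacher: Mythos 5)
Your proposal follows essentially the same route as the paper's own proof: invoke Theorem~\ref{Sample Ratio} to discount $\lvert D_1\rvert$ to an effective on-policy sample size $\lvert D_1\rvert(1-u\epsilon_0)$ (the paper also adds $\lvert D_0\rvert$ to this count, which does not change the asymptotics), plug that into the AM--GM-combined generalization bound of Assumption~\ref{NTK Generalization} with complexity $\lVert M^*\rVert_K$, set the result equal to $\epsilon_1$, and solve for $\lvert D_1\rvert$. You are in fact more candid than the paper about the two distribution-shift steps (rollout-vs-expert for the inverse model, and pseudo-label error to policy error), which the paper's proof passes over silently.
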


In order to make use of the unlabeled data set we will have to create an inverse dynamics model by rolling-out trajectories from the initial policy. We can compare the efficiency of this approach to using BC loosely by comparing error rates,
\begin{subequations}
    \begin{align}
        & \sqrt{\cfrac{2 \lVert M^* \rVert_K}{D_0+D_1 (1-u \epsilon_0)}} = \sqrt{\cfrac{2 \lVert \pi^* \rVert_K}{D_\text{BC}}}\\
        & \Rightarrow D_\text{BC} = (D_0+D_1(1-u\epsilon_0)) \cfrac{\lVert \pi^* \rVert_K}{\lVert M^*\rVert_K}
    \end{align}
\end{subequations}

Thus, the quality of the initial policy and the relative difficulty of learning the policy compared to the inverse dynamics model controls the efficiency of BCO over BC. One limitation of this analysis is that if our initial policy is only barely recurrent then it will take a very large number of samples to train. In fact, \citet{1805.01954} skips the single iteration case and gives a definition of BCO$(\alpha)$ which allows the robot to periodically train during a post-demonstration phase. This suggests that continual learning would be a better strategy. In fact, we'll show that the continual aggregation approach can dampen the impact of a poor initial policy,

\begin{theorem}
    \label{ode}
    Assume the initial policy output by BCO($\alpha$) $\pi_0$ is a recurrent $\epsilon_0$ policy. Following this we then periodically improve the inverse dynamics model as we obtain data $D_1$ from using the policy. Then the sample complexity $\lvert D_0 \rvert + \lvert D_1 \rvert$ to achieve $\epsilon$ error is,
    
    \begin{equation} 
        \lvert D_0 \rvert + \lvert D_1 \rvert \sim O \left( u^2 \lVert M^* \rVert_K \log\left( \cfrac{1/\epsilon - u}{1/\epsilon_0 - u} \right) + \lVert M^* \rVert_K/\epsilon^2 \right) 
    \end{equation}
\end{theorem}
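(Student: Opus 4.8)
The plan is to reduce the discrete BCO$(\alpha)$ iteration to a continuous (ODE) limit and then integrate, which is presumably why the theorem is labelled accordingly. Write $T = \lvert D_0 \rvert + \lvert D_1 \rvert$ for the cumulative number of environment interactions and let $\epsilon(T)$ denote the $0$--$1$ error of the policy currently held by the robot after $T$ samples, so that $\epsilon(0)=\epsilon_0$. The key structural fact is that not every collected transition is equally usable for improving $M$: by Theorem~\ref{Sample Ratio}, when the robot rolls out a recurrent $\epsilon$-expert only a fraction $(1-u\epsilon)$ of the sampled transitions lie in the on-policy region where the inferred labels agree with $\pi^*$. Hence, writing $m$ for the effective number of labelled transitions fed to the model-learning step, we have $dm/dT = 1 - u\epsilon(T)$.

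Next I would relate $m$ back to the achievable policy error. Because the unlabelled set $D_\text{demo}$ is effectively unlimited, the only bottleneck in labelling $D_\text{demo}$ and performing the subsequent policy update is the quality of $M$; applying Assumption~\ref{NTK Generalization} to the inverse dynamics model (RKHS complexity $\lVert M^* \rVert_K$) and pushing the resulting labelling error through to the policy via Theorem~\ref{Ross Theorem} gives $\epsilon \sim \sqrt{\lVert M^* \rVert_K / m}$, i.e.\ $m \sim \lVert M^* \rVert_K/\epsilon^2$ (the $\log(1/\delta)$ term is lower order once $\epsilon$ is small and, after a union bound over the iterations, is absorbed into the constant). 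Differentiating, $dm = -2\lVert M^*\rVert_K\,\epsilon^{-3}\,d\epsilon$, so substituting into $dm/dT = 1-u\epsilon$ yields the separable equation
\begin{equation}
dT \;=\; \frac{-2\lVert M^* \rVert_K}{\epsilon^{3}\,(1-u\epsilon)}\,d\epsilon .
\end{equation}

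It then remains to integrate from $\epsilon_0$ down to the target $\epsilon$. Using the partial-fraction identity $\frac{1}{\epsilon^{3}(1-u\epsilon)} = \frac{1}{\epsilon^{3}} + \frac{u}{\epsilon^{2}} + \frac{u^{2}}{\epsilon} + \frac{u^{3}}{1-u\epsilon}$, whose antiderivative is $-\frac{1}{2\epsilon^{2}} - \frac{u}{\epsilon} + u^{2}\ln\frac{\epsilon}{1-u\epsilon}$, together with the elementary simplification $\frac{\epsilon_0(1-u\epsilon)}{\epsilon(1-u\epsilon_0)} = \frac{1/\epsilon - u}{1/\epsilon_0 - u}$, one obtains
\begin{equation}
T \;=\; 2\lVert M^* \rVert_K\!\left[\frac{1}{2\epsilon^{2}} - \frac{1}{2\epsilon_0^{2}} + \frac{u}{\epsilon} - \frac{u}{\epsilon_0} + u^{2}\ln\frac{1/\epsilon - u}{1/\epsilon_0 - u}\right].
\end{equation}
Keeping only the dominant contributions — the $\lVert M^* \rVert_K/\epsilon^{2}$ term and the $u^{2}\lVert M^* \rVert_K\ln(\cdot)$ term, with the $u/\epsilon$ and constant pieces folded into $O(\cdot)$ — gives exactly the claimed sample complexity. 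Recurrence is preserved throughout the run because $\epsilon(T)$ decreases monotonically from the recurrent value $\epsilon_0$, so $1 - u\epsilon(T) \ge 1 - u\epsilon_0 > 0$ and the integrand never blows up.

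I expect the main obstacle to be making the ODE reduction rigorous rather than the integration itself: one must justify that the discrete aggregation $\pi_{n+1}, D_{n+1} = \mathrm{BCO}_{\pi_n}(D_\text{demo},D_n)$ is well approximated by the continuous flow, that the effective-sample relation $m \sim \lVert M^*\rVert_K/\epsilon^2$ holds step by step with the high-probability events compounded correctly over the (potentially many) iterations, and that finite-buffer effects are negligible in the regime $\epsilon \to 0$. Everything downstream of that is the routine computation above.
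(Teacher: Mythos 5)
Your proposal takes essentially the same route as the paper: model the BCO($\alpha$) aggregation as the continuum-limit ODE $dN/dt = 1 - u\epsilon$ with $\epsilon = \sqrt{\lVert M^*\rVert_K/N}$ and integrate, the only cosmetic difference being that you change variables to $\epsilon$ and use partial fractions where the paper integrates directly in $N$ — the two antiderivatives agree term by term and yield the same closed form. Your version is, if anything, slightly cleaner (it consistently keeps the factor $u$ that the paper drops in its displayed ODE before restoring it in the integral, and it notes explicitly why the integrand never blows up along the trajectory).
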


The proof is given in the supplementary material, and involves taking a continuum limit and then solving the resulting ODE.

\section{Experiments}

In this section, we try to verify our analysis by seeing whether or not BCO* is able to improve in performance over successive iterations. For each of the simulated environments, we train experts using Soft-Actor-Critic \cite{brockman2016openai, haarnoja2018soft}. The experts are trained using Stable Baselines with the default hyper-parameter settings \cite{stable-baselines}. Using these trained experts, we generate $30$ demonstrations for the Mountain Car, Ant, and Half-Cheetah environments using Gym and the PyBullet physics simulator \cite{gym,coumans2017pybullet}.

The BC and BCO models are two-layer ReLU MLP networks with layer dimensions of 300 and $200$ respectively. We train the model, hold out $30\%$ of the data for validation, and use checkpoints to select the model with the best validation loss. To evaluate BCO*, we first use a small number of expert demonstrations to generate an initial policy. We then evaluate the policy for 25 episodes and use the resulting data to improve the inverse dynamics model. We choose a larger iteration size for BCO* than Theorem \ref{ode} implies is optimal in order to reduce variance in the training. Note this buffer is far smaller than \citet{1805.01954} used in BCO($\alpha$). Each evaluation of BCO* is averaged over five separate runs of the training process, each with different random seeds. 

For comparisons, we look at the original implementation of BCO that uses random actions to learn the initial inverse dynamics model. Next, we compare against behavioral cloning (BC) trained using 30 demonstrations. We also compare against GAIL and Value-DICE, a newer algorithm that is meant to address the sample complexity issues of GAIL by directly matching the distribution of the robot with that of the expert \cite{kostrikov2019imitation}. Because BC will learn a policy given enough demonstrations, we train Value-Dice using 3 and 15 episodes from Ant and Half Cheetah respectively to compare distribution-matching with the label inference of BCO*. For each comparison, we average over three runs.

\begin{figure}
  \centering
  \includegraphics[width=1.0\textwidth]{./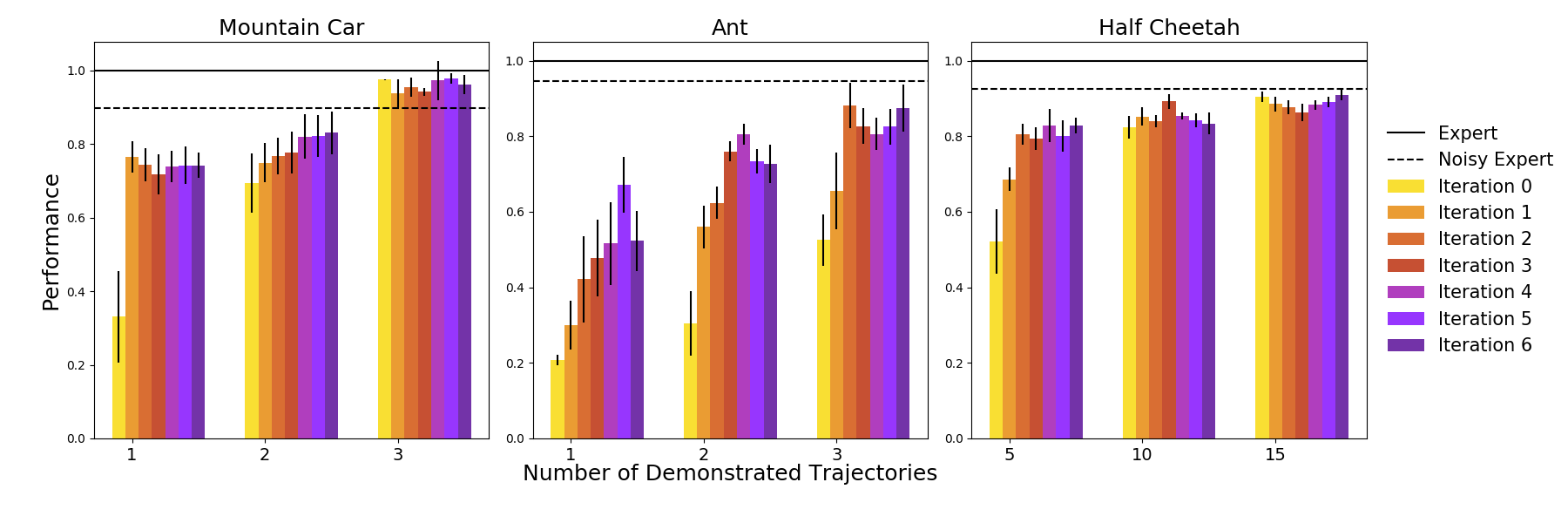}
  \caption{A visualization the evolution of performance during training for different numbers of demonstrations. Performance is measured as the normalized $[0,1]$ reward, where $0$ and $1$ are the mean rewards of random and expert policies, respectively. Error bars represent standard deviation of performance across multiple runs.}
  \label{fig:BCO_Results}
\end{figure}

\subsection{Mountain Car}

For Mountain Car, we use a noisy expert with $\epsilon = 0.01$. However, we note that there is no significant effect on BCO other than a decrease in expert performance. We collect $25$ rollouts in each iteration step and use them to improve the inverse dynamics model. We train all the models at each step for $5000$ epochs and evaluate using $500$ episodes.

\subsection{Ant / Half Cheetah}

For Ant, we use a noisy expert with isotropic Gaussian noise with variance $0.04$. We collect $25$ rollouts from the robot at each iteration step and use them to improve the inverse dynamics model. Besides the initial model, we train all of the models for $50$ epochs to increase efficiency and to avoid the generalization problems with respect to the inverse dynamics model. We evaluate using $50$ episodes at each iteration. 

\subsection{Results}

 Results are shown in Figure~\ref{fig:BCO_Results}. In the case of Mountain Car, the robot converges to near expert-level performance relatively quickly. For Ant, our robot is able to improve over the iterations and to near expert performance by three episodes. In Half Cheetah, the robot is able to perform comparably to the noisy expert. In general, for each of these environments, the performance increases with more iterations and/or demonstrations.

Adding noise to the expert is somewhat essential to our results. If we retrain the robot using noiseless-demonstrations (Fig.~\ref{fig:Perfect_Ant}), the convergence behavior is dampened. Finally, we can compare BCO* against our baselines. We see that BCO* has comparable performance with Value-DICE and BC while GAIL and BCO are too sample inefficient to learn a policy. GAIL, and also GAIfO, can ultimately learn a good policy, however, they are not sample efficient and struggle to learn anything using only 1e6 steps of environment interaction. BCO with a random policy step is also sample inefficient. Typically, we need an order of magnitude more interactions before the random policy will learn a reasonably accurate inverse dynamics model \cite{1805.01954}. 

\begin{figure}
    \centering
  \includegraphics[width=\textwidth]{./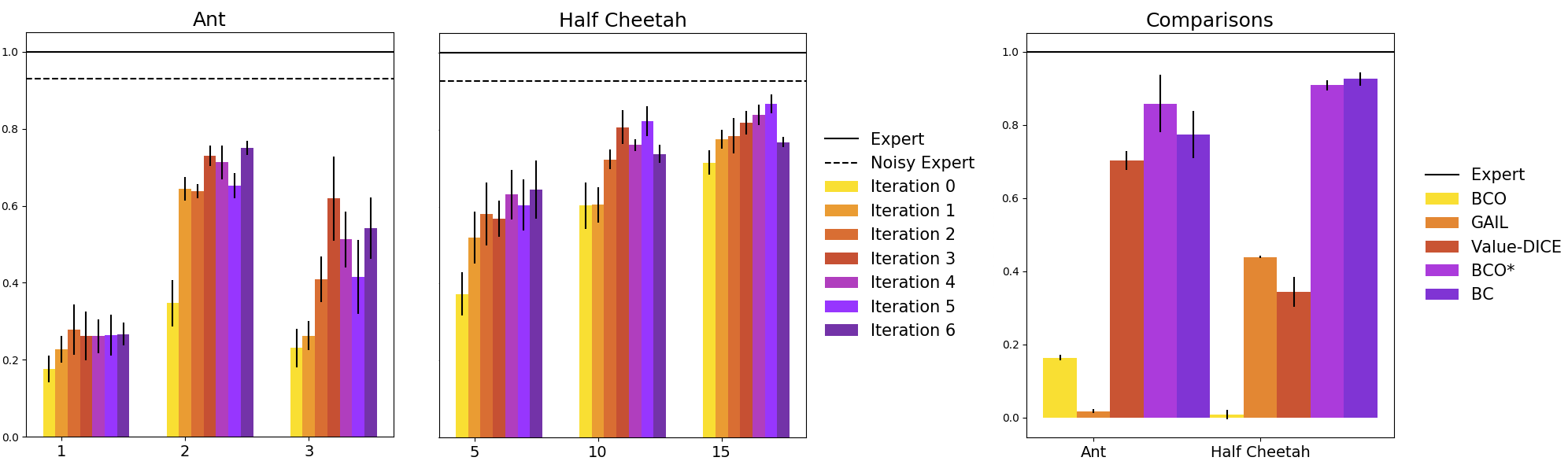}
    \caption{On the left we have runs of BCO where the robot learns from an expert without noise. The x-axis indicates the number of trajectories used for training. On the right we have comparisons against other algorithms using labeled data.}
    \label{fig:Perfect_Ant}
\end{figure}

\section{Discussion}

In this work, we proved that performing BCO* with concurrent training leads to faster improvement under relatively mild assumptions. Our experiments confirm that BCO* is capable of self-improvement in both the idealized setting, where the initial policy is close to the expert and realistic settings, where the initial policy is much farther away. On the other hand, we find that the inverse dynamics model is not always learnable to perfect accuracy, which is in line with results found by \citet{yang2019imitation}. However, this is a limitation of any learning from observation approach. 

Despite these challenges, we think this improvement to BCO offers an alternative paradigm for robot self-improvement. In the real world, labeled data is scarce and expert demonstrations without mistakes are rarer still. We think that these results show the promise of reformulating imitation learning from observation in terms of semi-supervised learning. Up to this point, BCO required an order of magnitude more environment interactions to learn the inverse dynamics model. In addition to being sample inefficient, it isn't always possible to safely execute a random policy in real environments. By introducing a small amount of labeled data we are able to dramatically reduce the sample complexity and to match the performance of behavioral cloning using more than twice as many labels. 

In future work, modifying the inverse dynamics model learning algorithm in order to take advantage of advances in semi-supervised learning could be fruitful~\cite{berthelot2019mixmatch,xie2019unsupervised}. Another avenue for future work is exploring the combination of BCO with RL approaches to ILfO. Methods such as reinforced inverse-dynamics modeling explore this direction with good success~\cite{pavse2019ridm}. 


\clearpage
\acknowledgments{The author would like to thank Chip Schaff for fruitful discussion on experimental details and reviewing and Falcon Dai for technical discussion regarding theoretical results.}


\bibliography{example}  

\newpage

\section{Appendix}

\subsection{Proofs}

\begin{theorem}{(Theorem 3.1)}
Let $\pi$ be such that $\mathbb{E}_{s \sim d_{\pi}}[l(s,\hat \pi)] = \epsilon$ and $Q_{T-t+1}^{\pi^*}(s,a) - Q_{T-t+1}^{\pi^*}(s,\pi^*) \le \frac{1}{2} u$ for all times and actions where $d_{\pi}^t(s) > 0$, then $J(\pi) \le J(\pi^*) + \frac{1}{2} u T \epsilon$
\end{theorem}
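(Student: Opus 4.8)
The plan is to run the standard telescoping (``performance difference'') argument over a family of hybrid policies. For $t = 0,1,\ldots,T$ let $\pi_t$ be the policy that executes $\pi$ for the first $t$ time steps and then switches to $\pi^*$ for the remaining $T-t$ steps, so that $\pi_0 = \pi^*$, $\pi_T = \pi$, and
\begin{equation}
J(\pi) - J(\pi^*) = \sum_{t=1}^{T} \bigl( J(\pi_t) - J(\pi_{t-1}) \bigr).
\end{equation}
First I would observe that $\pi_{t-1}$ and $\pi_t$ induce identical trajectory distributions through the first $t-1$ steps, and both arrive at time $t$ with state distribution $d_\pi^t$; the two policies differ only in the action chosen at time $t$, and from then on both accrue cost-to-go under $\pi^*$.

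Next I would match each summand to an advantage term. Conditioning on the time-$t$ state $s$, policy $\pi_{t-1}$ plays $\pi^*_s$ and incurs expected remaining cost $Q^{\pi^*}_{T-t+1}(s,\pi^*)$, while $\pi_t$ plays $\pi(s)$ and incurs $Q^{\pi^*}_{T-t+1}(s,\pi(s))$, so that
\begin{equation}
J(\pi_t) - J(\pi_{t-1}) = \mathbb{E}_{s \sim d_\pi^t}\!\left[\, Q^{\pi^*}_{T-t+1}(s,\pi(s)) - Q^{\pi^*}_{T-t+1}(s,\pi^*) \,\right].
\end{equation}
The key pointwise estimate is then: if $l(s,\hat\pi) = 0$ (i.e.\ $\pi$ agrees with $\pi^*$ at $s$) the bracket vanishes, and otherwise it is at most $\max_a\bigl[Q^{\pi^*}_{T-t+1}(s,a) - Q^{\pi^*}_{T-t+1}(s,\pi^*)\bigr] \le \tfrac12 u$ by the hypothesis, which applies since $d_\pi^t(s) > 0$ on the integration domain. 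Hence each summand is bounded by $\tfrac12 u \, \mathbb{E}_{s\sim d_\pi^t}[l(s,\hat\pi)]$.

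Finally I would sum over $t$ and use that the time-averaged state distribution $d_\pi = \tfrac1T \sum_{t=1}^T d_\pi^t$ satisfies $\mathbb{E}_{s\sim d_\pi}[l(s,\hat\pi)] = \epsilon$ by hypothesis, so $\sum_{t=1}^T \mathbb{E}_{s\sim d_\pi^t}[l(s,\hat\pi)] = T\epsilon$ and therefore $J(\pi) - J(\pi^*) \le \tfrac12 u T \epsilon$. The main obstacle I anticipate is purely bookkeeping rather than analytic: aligning the time-to-go index $T-t+1$ on the $Q$-function with the hybrid-policy decomposition, and ensuring the $Q$-gap hypothesis is invoked only at states in the support of $d_\pi^t$ so the ``where $d_\pi^t(s)>0$'' qualifier is respected. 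Once the telescoping is set up correctly, the remaining content is elementary.
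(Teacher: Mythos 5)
Your proposal is correct and follows essentially the same route as the paper's proof: the same hybrid-policy telescoping, the same identification of each summand with an expected $Q$-advantage under $d_\pi^t$, and the same bound via the $0$-$1$ loss. You actually make explicit the step the paper elides — that the advantage term vanishes when $\pi$ agrees with $\pi^*$ and is otherwise at most $\tfrac12 u$, which is how the factor $l(s,\hat\pi)$ enters — so no issues.
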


\begin{proof}
Given the policy $\pi$, we consider the policy $\pi_{1:t}$ that executes $\pi$ in the first $t$-steps and then executes the expert following that. Then,

\begin{subequations} \label{eqn:cost-empirical}
    \begin{align}
    J(\hat \pi) &= J(\pi^*) + \sum_t J(\pi_{1:T-t})-J(\pi_{1:T-t-1}) \quad \text{(Telescoping)}\\ 
    &= J(\pi^*) + \sum_t \mathbb{E}_{d^t_{\hat \pi}}[Q_{T-t+1}(s,\hat \pi) - Q_{T-t+1}(s,\pi^*)]\\
    &\le J(\pi^*) + \frac{1}{2} u \sum_{t=1}^T \mathbb{E}_{d^t_{\hat \pi}}[l(s,\pi]]\\
    &\le J(\pi^*) + \frac{1}{2} u T \epsilon
    \end{align}
\end{subequations}

\end{proof}

\begin{theorem}
\label{BCO Recurrence}

Let $\mathbb{D}$ indicate the distribution of trajectories generated by an $\epsilon$- expert policy. Let $\epsilon^*$ by the least upper bound such that the expert is recurrent. Also let $D = \lbrace{ \zeta_1, \zeta_2, \ldots \rbrace}$ be a sequence of the expert demonstrations where each $\zeta_i$ is a sequence of unlabeled demonstrations $\lbrace (s_0), \ldots, (s_N) \rbrace$. Similarly, let $D_0$ indicate a labeled set of demonstrations such that $\lvert D_0 \rvert << \lvert D \rvert$. Finally, let $K$ indicate the complexity for learning a policy/inverse dynamics model. If we obtain a labeled set of demonstrations such that,

\begin{align} \lvert D_0 \rvert \sim O \left(\cfrac{(K + \log(1/\delta))}{\epsilon^{* 2}} \right)
\end{align}

\noindent then with probability $1-\delta$ we can learn a recurrent policy $\hat \pi$.

\end{theorem}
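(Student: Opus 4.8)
The plan is to derive the bound directly from Assumption~\ref{NTK Generalization} by choosing the training-set size so that the guaranteed population $0$-$1$ error falls strictly below the recurrence threshold. Recall that a policy is recurrent when its effective error rate is less than $1/u$, and $\epsilon^*$ is precisely this least upper bound, so $\epsilon^* = 1/u$. Hence it suffices to produce, from the labeled set $D_0$, a policy $\hat\pi$ whose $0$-$1$ loss $L_D^{0\text{-}1}(\hat\pi) = \mathbb{E}[\mathbbm{1}\{\hat\pi_s \neq \pi^*_s\}]$ satisfies $L_D^{0\text{-}1}(\hat\pi) \le \epsilon_0 < \epsilon^*$; such a $\hat\pi$ then behaves, for the purposes of Theorems~\ref{Ross Theorem} and~\ref{Sample Ratio}, as a recurrent $\epsilon_0$-expert.

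First I would instantiate Assumption~\ref{NTK Generalization} with $n = \lvert D_0\rvert$ and $K(\bm{y}) = K$, the RKHS complexity of the target being fit on $D_0$ (this is $\lVert\pi^*\rVert_K$ if the initial policy is trained by BC, or $\lVert M^*\rVert_K$ if it is obtained through an inverse dynamics model by BCO, using Assumption~\ref{inverse_learn}). This gives, with probability at least $1-\delta$, $\mathbb{E}[L_D^{0\text{-}1}(\bm{\theta})] \le O(\sqrt{K/n} + \sqrt{\log(1/\delta)/n})$. Requiring the right-hand side to be at most $c\,\epsilon^*$ for a fixed constant $c \in (0,1)$ and splitting the two summands forces $\sqrt{K/n} \lesssim \epsilon^*$ and $\sqrt{\log(1/\delta)/n} \lesssim \epsilon^*$; inverting each and combining yields $n \sim O\!\left((K + \log(1/\delta))/\epsilon^{*2}\right)$, which is the claimed bound. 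Under this choice the learned policy has population $0$-$1$ error $\epsilon_0 \le c\,\epsilon^* < \epsilon^*$, hence is recurrent; the strict inequality is what the extra constant factor $c$ buys us, which is also why the statement is phrased with $\epsilon^*$ as a \emph{least upper bound} rather than an attained value.

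The step I expect to be the main obstacle is the passage from the training-distribution generalization bound supplied by Assumption~\ref{NTK Generalization} to the on-policy error $\mathbb{E}_{s \sim d_{\hat\pi}}[l(s,\hat\pi)]$ that actually enters the definition of recurrence --- the usual covariate-shift gap in behavioral cloning. My plan is to close it using the recoverability package: $D_0$ is drawn from $\epsilon$-expert trajectories, whose state distribution is $d_{\pi^*}$ up to the $\epsilon$-perturbation, and by the $\epsilon$-expert assumption (so $u \ll T$) together with Theorem~\ref{Sample Ratio}, the state distribution of any near-expert $\hat\pi$ overlaps $d_{\pi^*}$ up to a factor $1/(1-u\epsilon_0)$, which is absorbed into the $O(\cdot)$ constant. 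There is a mild circularity here --- bounding $d_{\hat\pi}$ against $d_{\pi^*}$ presumes $\hat\pi$ is already a reasonable $\epsilon$-expert --- and I would handle it by running the argument at the threshold: as soon as the generalization bound certifies training error below $c/u$, the change-of-measure factor is at most $1/(1-c)$, which keeps the on-policy error below $\epsilon^*$ and makes any self-consistent choice of $c$ with $c/(1-c) < 1$ legitimate. If one prefers to work with the inverse dynamics model instead, the same chain goes through verbatim with $l$ replaced by the bounded relative-likelihood cost of Section~4.2, which satisfies the hypotheses of Theorem~\ref{Ross Theorem}.
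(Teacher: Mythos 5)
Your proposal matches the paper's proof in its essential step: both invert the generalization bound of Assumption~\ref{NTK Generalization} at the recurrence threshold $\epsilon^*$ (the paper combines the two square-root terms via AM--GM where you split them, but the resulting bound $n \sim O((K+\log(1/\delta))/\epsilon^{*2})$ is identical). Your additional discussion of the covariate-shift gap between the training-distribution error and the on-policy error is a point the paper's proof silently skips, so it is a welcome extra, not a divergence in approach.
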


\begin{proof}

We start by sampling a few demonstrations $D_0$ from the expert state-action distribution $\mathbb{D}$. Since we assume the unlabeled demonstration set is large the policy itself will have an error equal to it's inverse dynamics model error. If the inverse dynamics model is easy to learn then we can train the inverse dynamics model and then label the unlabeled demonstrations. Otherwise, we can just directly learn a policy. We can check that $\lvert D_0 \rvert$ is large enough directly by using assumption 4.4. 

\begin{equation}
\epsilon \le \sqrt{K/n} + \sqrt{\log(1/\delta)/n}
\end{equation}

By AM-GM we have,
\begin{equation}
    \epsilon \le \sqrt{\cfrac{2 (K + \log(1/\delta))}{n}}
\end{equation}
In order for a policy to be recurrent we need $\epsilon < \epsilon^*$. This condition is satisfied when,
\begin{align}
\epsilon^* = \sqrt{\cfrac{2 (K + \log(1/\delta))}{n}}\\ 
\Rightarrow 1/\epsilon^{*2} = \cfrac{2 (K + \log(1/\delta))}{n}\\ 
\Rightarrow n \ge \cfrac{2 (K + \log(1/\delta))}{\epsilon^{*2}}
\end{align}
Then as long as $\lvert D_0 \rvert$ is $O \left(\cfrac{(K + \log(1/\delta))}{\epsilon^{* 2}} \right)$ then with probability $1-\delta$ the resulting policy will be recurrent.

\end{proof}

\begin{theorem}{(Theorem 4.9)}
Say we can use $D_0$ to create a recurrent $\epsilon_0$-expert policy. If we create $D_1$ by observing how this policy operates in the environment we can then learn an $\epsilon_1$-expert policy with probability $1-\delta$ if,
\begin{equation}
    \lvert D_1 \rvert \sim O\left( \cfrac{\lVert M^* \rVert_K + \log(1/\delta)}{\epsilon_1^2 (1 - u \epsilon_0)} \right)
\end{equation}
\end{theorem}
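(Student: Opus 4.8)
The plan is to chain together the three ingredients already established in the excerpt: the recurrence guarantee (Theorem~\ref{BCO Recurrence}), the sample-ratio bound (Theorem~\ref{Sample Ratio}), and the kernel generalization bound (Assumption~\ref{NTK Generalization}). First I would take as given, by hypothesis, that $D_0$ suffices to produce a recurrent $\epsilon_0$-expert policy $\pi_0$. Rolling $\pi_0$ out in the environment for the buffer size $N$ (summed over episodes) produces $\lvert D_1 \rvert$ labeled transitions $(s_t,a_t,s_{t+1})$ drawn from the state distribution $d_{\pi_0}$, and these form the training set for the inverse dynamics model.

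The key observation is that the inverse dynamics model must ultimately be accurate on the \emph{expert's} state-transition distribution $d_{\pi^*}$ in order to correctly label $D_\text{demo}$, whereas $D_1$ is sampled from $d_{\pi_0}$. Theorem~\ref{Sample Ratio} quantifies this mismatch: the number of transitions from $\pi_0$ that fall in the ``expert-like'' region obeys $S(\pi^*)/S(\pi_0) \le 1/(1-u\epsilon_0)$, so a fraction at least $(1-u\epsilon_0)$ of the $\lvert D_1 \rvert$ samples are effective for fitting $M^*$ on the region that matters. I would therefore set the effective training-set size to $n_\text{eff} \sim \lvert D_1 \rvert (1-u\epsilon_0)$. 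Because the unlabeled demonstration set is effectively unlimited, the learned policy's error is governed by the inverse dynamics error on this region, exactly as in the argument used in the proof of Theorem~\ref{BCO Recurrence}.

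Substituting $n_\text{eff}$ into Assumption~\ref{NTK Generalization} with complexity measure $\lVert M^* \rVert_K$ gives, with probability $1-\delta$,
\begin{equation}
\mathbb{E}\!\left[L_{D}^{0-1}\right] \le O\left( \sqrt{\frac{\lVert M^*\rVert_K}{\lvert D_1 \rvert (1-u\epsilon_0)}} + \sqrt{\frac{\log(1/\delta)}{\lvert D_1 \rvert (1-u\epsilon_0)}} \right) \le O\left( \sqrt{\frac{2(\lVert M^*\rVert_K + \log(1/\delta))}{\lvert D_1 \rvert (1-u\epsilon_0)}} \right),
\end{equation}
where the last inequality is the same AM-GM step used in the earlier proofs. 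Requiring this bound to be at most $\epsilon_1$ and solving for $\lvert D_1 \rvert$ yields $\lvert D_1 \rvert \sim O\!\big( (\lVert M^*\rVert_K + \log(1/\delta))/(\epsilon_1^2(1-u\epsilon_0)) \big)$, which is the stated claim.

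The step I expect to be the main obstacle is the second one: rigorously justifying that the gap between $d_{\pi_0}$ and $d_{\pi^*}$ degrades the effective sample count by precisely the factor $(1-u\epsilon_0)$ of Theorem~\ref{Sample Ratio}. This needs an importance-weighting / distribution-overlap argument that identifies the ``fraction of expert-labeled samples'' in Theorem~\ref{Sample Ratio} with the coverage of $d_{\pi^*}$ by $d_{\pi_0}$, together with the observation that recurrence ($\epsilon_0 < 1/u$) keeps this factor bounded away from zero so that the bound remains finite. Everything else is a direct substitution into results already proved in the excerpt.
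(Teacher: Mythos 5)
Your proposal follows essentially the same route as the paper's own proof: discount the $\lvert D_1\rvert$ samples by the factor $(1-u\epsilon_0)$ from the sample-ratio theorem to get an effective sample count, substitute that into the kernel generalization bound via the AM-GM step, set the result equal to $\epsilon_1$, and solve for $\lvert D_1\rvert$. The paper is in fact terser than you are about the step you flag as the main obstacle (it simply asserts that the number of useful samples is $\lvert D_1\rvert(1-u\epsilon_0)$), so your version, if anything, makes the argument more explicit.
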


\begin{proof}

By theorem 4.7 the number of useful samples in $D_1$ is $D_1 (1 - u \epsilon)$. From here, all we need to do is solve,

\begin{subequations}
    \begin{align}
        & \sqrt{\cfrac{2 (\lVert M^*\rVert_K + \log(1/\delta))}{\lvert D_0 \rvert + \lvert D_1 \rvert (1-u \epsilon_0)}} = \epsilon_1 \\
        & \Rightarrow \cfrac{2 (\lVert M^* \rVert_K + \log(1/\delta))}{\epsilon_1^2} = \lvert D_0 \rvert + \lvert D_1 \rvert ( 1 - u\epsilon_0)\\
        & \Rightarrow \lvert D_1 \rvert \sim O\left( \cfrac{\lVert M^*\rVert_K + \log(1/\delta)}{\epsilon_1^2 (1 - u \epsilon_0)}\right)
    \end{align}
\end{subequations}

\end{proof}

\begin{theorem}{(Theorem 4.10)}
    Assume the initial policy output by BCO($\alpha$) $\pi_0$ is recurrent. Then the sample complexity to achieve $\epsilon$ error is,
    
    $$N \sim O\left(\log \left( \cfrac{2 - (1 -\epsilon)^{-u}}{2 - (1 -\epsilon_0)^{-u}} \right) + \beta^{-1} L^2 \lVert M^*\rVert / \epsilon^2 \right)$$
\end{theorem}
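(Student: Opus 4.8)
The plan is to compress the iteration of Algorithm~\ref{BCO_Algorithm} into a single scalar: the number of \emph{useful} (on-policy, correctly labelable) transitions accumulated for the inverse dynamics model. Write $m_n$ for this count after $n$ rounds, and $K = \lVert M^* \rVert_K$. By Assumption~\ref{NTK Generalization} (folding the $\log(1/\delta)$ term into the constant), the policy produced in round $n$ has $0$-$1$ error $\epsilon_n \lesssim \sqrt{K/m_n}$; conversely, reaching target error $\epsilon$ requires only $m_n \gtrsim K/\epsilon^2$. By Theorem~\ref{Sample Ratio}, rolling out the recurrent $\epsilon_n$-expert $\pi_n$ keeps a fraction $\ge 1 - u\epsilon_n$ of each fresh transition on-policy relative to $\pi^*$ and hence usable, so collecting $\Delta N_n$ new samples raises $m$ by $\approx (1-u\epsilon_n)\,\Delta N_n$. (Replacing this additive bound by the multiplicative ratio $S(\pi^*)/S(\pi)$ of the same theorem swaps the factor $1-u\epsilon$ for $2-(1-\epsilon)^{-u}$, which is where the appendix form of the denominator comes from.)

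First I would pass to the continuum limit, justified by the buffer being small relative to $|D_\text{demo}|$ and by the number of rounds being large: letting $N$ denote the total environment samples drawn and $m(N)$ the useful count, the recursion becomes
\begin{equation}
\frac{dm}{dN} = 1 - u\sqrt{\frac{K}{m}},
\end{equation}
with initial condition $m(N_0) = m_0 \sim K/\epsilon_0^2$ supplied by the initial recurrent policy $\pi_0$ (Theorem~\ref{BCO Recurrence}). Recurrence of $\pi_0$ is precisely $u\epsilon_0 < 1$, i.e.\ $m_0 > u^2 K$, which makes the right-hand side positive, so $m$ is increasing and $\epsilon$ is decreasing throughout.

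Next I would solve the ODE by separation of variables, $dN = dm/(1 - u\sqrt{K/m})$. The substitution $v = \sqrt{m}$ followed by $w = v - u\sqrt{K}$ turns the integrand into $2w + 4u\sqrt{K} + 2u^2 K / w$, giving
\begin{equation}
N(\epsilon) - N_0 = \Big[\,w^2 + 4u\sqrt{K}\,w + 2u^2 K\ln w\,\Big]_{w_0}^{w_f},
\end{equation}
with $w_f = \sqrt{K}\,(1/\epsilon - u)$ and $w_0 = \sqrt{K}\,(1/\epsilon_0 - u)$. Expanding: the $w^2$ term contributes $K(1/\epsilon-u)^2 = \Theta(K/\epsilon^2)$, the linear term contributes $O(K/\epsilon)$ which is lower order, and the logarithmic term contributes $2u^2 K\ln\!\big((1/\epsilon-u)/(1/\epsilon_0-u)\big)$, the $\sqrt{K}$ inside the log cancelling in the difference; $N_0$ and the $w_0$ terms are all dominated by $K/\epsilon^2$ when $\epsilon<\epsilon_0$. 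Collecting dominant contributions yields
\begin{equation}
|D_0| + |D_1| \sim O\!\left(u^2 K\log\!\left(\frac{1/\epsilon - u}{1/\epsilon_0 - u}\right) + K/\epsilon^2\right),
\end{equation}
which is the claim; the $\beta^{-1}L^2$ prefactor in the appendix version is simply the explicit complexity constant of the SGD/NTK generalization bound of Assumption~\ref{NTK Generalization} written in place of the bare $\lVert M^*\rVert_K$, and the $2-(1-\epsilon)^{-u}$ denominator is the compounding variant noted above (note $2-(1-\epsilon)^{-u} \approx 1-u\epsilon$ for small $\epsilon$, so the two forms agree to leading order).

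The main obstacle is making the continuum approximation rigorous: the discrete update is an inequality holding only with probability $1-\delta$ per round, $\epsilon_n$ moves in jumps rather than smoothly, and one must union-bound over rounds and argue that the accumulated useful count still dominates, so the honest statement is a two-sided sandwich of the discrete recursion between two ODEs of this common form. A secondary subtlety is the limit $\epsilon_0 \uparrow 1/u$: there $w_0 \downarrow 0$ and $\ln w_0 \to -\infty$, correctly reflecting that a barely-recurrent initialization forces an arbitrarily long warm-up phase; one must verify $w>0$ is preserved for all $N \ge N_0$, which follows from the monotonicity of $m$ established above.
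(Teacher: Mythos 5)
Your proposal follows essentially the same route as the paper: model the accumulation of on-policy samples as a discrete recursion $\epsilon_k \sim \sqrt{K/N_k}$, $N_k = N_{k-1} + \alpha(1-u\epsilon_k)$, pass to the continuum limit to get the ODE $dN/dt = 1 - u\sqrt{K/N}$, integrate (your substitution $w = \sqrt{m} - u\sqrt{K}$ reproduces exactly the antiderivative the paper states directly), and substitute $N = K/\epsilon^2$ to extract the $u^2 K \log\bigl(\tfrac{1/\epsilon - u}{1/\epsilon_0 - u}\bigr) + K/\epsilon^2$ asymptotics. Your additional remarks on making the continuum limit rigorous and on reconciling the appendix's $2-(1-\epsilon)^{-u}$ statement with the derived bound address gaps the paper leaves open, but the core argument is the same.
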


\begin{proof}
The proof boils down to solving a non-linear ODE and showing that the only possibility for divergence is logarithmic which is apparent after changing variables and decomposing the integral into partial-fractions. We can solve the ODE with integration. However, we don't care about multiplying constants so we'll omit them whenever convenient.

Say $N_k$ is the number of accumulated on-policy samples we have after $k$-sessions of interacting with the environment. Then we expect,

\begin{align}
    \epsilon_{k+1} = \sqrt{\frac{\lVert M^*\rVert_K}{N_k}}, \quad N_{k} = N_{k-1} + \alpha (1-u \epsilon_k)
\end{align}

\noindent This update strategy gives us $BCO(\alpha)$. We can analyze this via the continuous approximation,

\begin{align}
    \begin{aligned}
        \delta N(\alpha k) = \alpha (1-u \epsilon_k) \quad \\
        \Rightarrow \frac{dN}{dt} = (1- u \epsilon_k), \quad \alpha \to dt
    \end{aligned}
\end{align}

\noindent Now we can write,

\begin{align}
    \begin{aligned}
         \frac{dN}{dt} = (1 - u \epsilon) = \left(1 - \sqrt{\frac{\lVert M^*\rVert_K}{N(t)}} \right), \quad N(0) = N_0
    \end{aligned}
\end{align}

This is a differential align that relates the number of effective samples $N$ we have after interacting with the environment for $t$ time-steps. As $N \to \infty$ we see that $\frac{dN}{dt} \to 1$ which indicates that BCO recovers the true policy in the limit and spends more time on policy than the demonstrations do. We can solve the ODE by integrating,

$$\int \cfrac{dN}{1 - u \sqrt{\lVert M^*\rVert_K/N}} = 2 u^2 \lVert M^*\rVert_K \log(\sqrt{N} - u \sqrt{\lVert M^*\rVert_K}) + 2 u \sqrt{\lVert M^*\rVert_K N} + N = t$$

We can substitute $N = \lVert M^*\rVert /\epsilon^2$ and the initial condition to arrive at,

$$ j\Delta t  = 2 u^2 \lVert M^* \rVert_K \log\left( \cfrac{1/\epsilon - u}{1/\epsilon_0 - u} \right) + 2 u \lVert M^*\rVert_K (1/\epsilon - 1/\epsilon_0) + \lVert M^*\rVert_K (1/\epsilon^2-1/\epsilon_0)$$

While we still have a divergence when $\epsilon_0 \to 1/u$ it's now been suppressed to logarithmic. Asymptotically, if the number of samples $D_1 = t$ then we have,

$$\lvert D_1 \rvert \sim O \left( u^2 \lVert M^*\rVert_K \log\left( \cfrac{1/\epsilon - u}{1/\epsilon_0 - u} \right) + \lVert M^*\rVert_K (1/\epsilon^2-1/\epsilon_0^2) \right)$$

\end{proof}

\end{document}